\documentclass[a4paper,twoside,12pt]{article}
\usepackage[english]{babel}

\usepackage{amsmath,amsfonts,amssymb,amsthm}
\newtheorem{theorem}{Theorem}
\newtheorem{lemma}{Lemma}

 \title{  {\bf  Diagonal  Multi-omics  Integration  of Heterogeneous Datasets} }

\author{ Maksim \,V.~Kukushkin,  Mikhail S. Arbatskiy,  Dmitriy E. Balandin,\\ Alexey V. Churov   \\ \\
  \small  \textit{Pirogov Russian National Research Medical University,}\\
    \small  \textit{Russian Clinical and Research Center of Gerontology,}\\
    \small  \textit{ Ministry of Health of the Russian Federation,  129226, Moscow, Russia}\\
  \small\textit{Russian Academy of Sciences,  Kabardino-Balkarian Scientific Center,}\\
 \small\textit{Institute of Applied Mathematics and Automation, 360000,  Nalchik, Russia}\\}

\date{}
\begin{document}
\maketitle

\begin{abstract}

 In this paper, we consider methods for the diagonal multi-omics integration of heterogeneous datasets. Several approaches to the nature of biological heterogeneity are analyzed and developed to comprehend  more clearly the generated differences. Specifically, the extremal trace problems for the coupled Laplacian on sets homeomorphic to the Stiefel manifold embedded in the complex Euclidean space are investigated. The gradient ascent method for the maximization problem is elaborated in the classical terms of functional analysis, which is of significant interest in itself. On this basis, we introduce a novel characteristic of dataset heterogeneity by employing  the norm of the difference between the maximum and minimum points.

\end{abstract}
\begin{small}\textbf{Keywords:}
  Manifold alignment; Unsupervised topological alignment
of single-cell multi-omics integration; Graph Laplacian;     Single cell RNA-seq; Single cell epigenomics.   \\\\
{\textbf{MSC}  47A75; 47B15; 47A10; 58C40; 62H30; 92C42  }
\end{small}

\section{Introduction}

Current biomedical research is experiencing a period of massive accumulation of molecular data. High-throughput technologies enable the simultaneous measurement of thousands of molecular features across a fixed set of objects at multiple levels of cellular organization, including genomic, transcriptomic, proteomic, and metabolomic profiles. The primary objective of integrative analysis is to establish meaningful structural relationships among these diverse feature sets to uncover the underlying mechanisms of complex diseases, identify biomarkers, and map molecular regulatory programs. However, despite rapid technological progress, the mathematical formalization of these heterogeneous feature spaces remains a fundamental challenge. A living cell behaves  as a complex dynamic system where distinct feature classes are interconnected in a non-linear manner, governed by feedback loops and varying time scales. Attempting to describe such systems exclusively through classical statistical correlation and   linear projection methods, such as those implemented in multivariate exploration packages like mixOmics \cite{Rohart2017}, often proves insufficient. Such approaches typically identify jointly varying features without capturing the structural and directional dependencies inherent to the system.

 To provide unified representations, several probabilistic and machine learning models have been developed \cite{Zhang2022,Ashuach2023,KaiCao2022}.   While these methods offer valuable computational tools, they are bound by critical methodological limitations.   A large number  of current alignment algorithms require  artificial penalty functions or heavily rely  on the strict structural equivalence of the underlying data matrices, limiting their mathematical generality.  Most methods reduce data to low-dimensional representations optimized solely for visualization or clustering, failing to provide an analytical measure of the structural consistency between the coupled feature systems.

The aforementioned limitations underscore the need to transition from heuristic matrix alignment to a fundamental operator theory. Instead of imposing artificial constraints on the data structure, a more natural mathematical approach involves the direct coupling of data matrices independent of any   regularizing constructions. From a practical perspective,  our research group has previously investigated various aspects of this problem, including genomic mutations \cite{Mironenko2022}, single-cell transcriptional profiles \cite{Arbatskiy2024}, proteomic data \cite{Kulebyakina2024}, and small non-coding RNAs \cite{Basalova2020}.

Extensive research has focused on advancing the  concept of embedding heterogeneous measurements into a unified latent space \cite{Joshua2017,JieLiu2019,Duren2018}. For instance, \cite{KaiCao2020} introduces a new method for unsupervised topological alignment
of  multi-omics data, which  does not require prior information on the correspondence between cells or measurements. Similarly, the method in \cite{Dou2020} establishes a topological alignment between two distinct modalities using a transition matrix to project one feature space onto another. However,   the approaches devoted to coupling the heterogeneous datasets and their mapping into a common latent space should  be classified and systematized in order to identify a generalized  mathematical structure that is invariant with respect to several classical methods of   coupled manifold alignment.

In the paper \cite{KABaChu26}, we elaborate the coupled graph Laplacian method \cite{KABaChu25},\cite{Belkin2003} for the alignment of topological structures in Euclidean spaces, the central idea of which is to create a mapping of various sets onto a space in which images can be comparable in some sense. The problem can be considered in the context of the analysis of heterogeneous data, regardless of any assumptions on the coupled matrix corresponding to the  datasets. Moreover, we provide a qualitative theory of the extremal trace problem on   sets homeomorphic to the Stiefel manifold for the coupled graph Laplacian operator, which is of interest in itself. Generally, the main scientific gap that is   eliminated is the absence of a unified feature space for analyzing data of different natures  lying on various low-dimensional manifolds in their corresponding high-dimensional initial spaces \cite{Ham2003}.

In this paper, inspired by a complete description of the extremal trace problem  given in \cite{KABaChu26},  we  study   the questions related to the biological application of the extremal points on the sets homeomorphic to the Stiefel manifold. The central focus  is the deployment of the maximization approach that  has not been considered previously  in the context  of    manifold  alignment    methods.

 The minimum problem lies at the foundation of the conceptual idea of the topological alignment. Its solution represents images of the datasets in the latent space where the topological neighborhoods of the points are positioned  as close as possible. The latter phenomenon is described by the term {\it alignment}.
The direct consequence for the resulting low-dimensional embedding is a total
flattening of the geometric structure. In the latent space generated by the minimum
trace problem, tightly interconnected objects are continuously condensed  into a  highly smoothed and low-dimensional continuous manifold.
While this global alignment exposes the fundamental harmonious  structure shared by the
independent feature systems, it inherently introduces an oversmoothing effect,
obscuring subtle localized variations and fine-grained subpopulation boundaries. This tendency to suppress significant localized phenomena creates  a distinct methodological limitation, thereby establishing a direct theoretical prerequisite for introducing the maximization approach, which is specifically elaborated  to capture and strengthen underlying structural heterogeneities.

Conversely, the maximization approach identifies the directions of extreme structural discrepancy generated by phase shifts between the real and imaginary parts of the coupled Laplacian. Whereas the consensus model, corresponding to the minimum problem, flattens local gradients, the maximum problem strengthens them, thereby acting as a high-contrast mathematical tool to uncover hidden stratification.
 This mathematical concept introduces a fundamental duality between two optimization problems.

The fundamental limitation of traditional manifold alignment lies in the inherent
relativity and isolated blindness of the minimization approach. By design, the minimum problem
provides a smoothed projection of the data; however, this invariant structure remains
methodologically blind, as it cannot analytically determine whether it has suppressed
unimportant stochastic noise or erased critical regulatory anomalies. Evaluating
the embedding against an external canonical standard, such as the   real or imaginary
components of the graph  Laplacian considered  independently, breaks the algebraic symmetry of the
heterogeneous feature spaces by arbitrarily prioritizing one dataset over another.

To resolve this uncertainty, we propose employing   the operator norm of the difference between the maximum and minimum points  as an intrinsic  mathematical
standard. Rather than relying on external regularizers, this indicator evaluates
the maximum potential divergence between the   smoothing   provided by the minimization approach
and the   separation induced by the ``microscope'' of the maximization
approach. Even in scenarios where   the embedding corresponding to the maximum point  is not the primary
object of visualization, the norm of the difference serves as a quantitative measure
of the alignment stability. It indicates the exact value
by which the underlying structural conflicts can distort the latent space. Without
this  indicator, any  mapping  remains fundamentally incomplete and speculative,
whereas its deployment  delivers a mathematically rigorous, verified evaluation of
the true geometric nature of the coupled spaces.

\section{Preliminaries}

Throughout the paper, we consider   finite dimensional matrices  $\mathbb{C}^{n\times m},\;n,m\in \mathbb{N}$ over the field of  complex numbers  and use the following notation $
A =\{a_{sj}\}\in \mathbb{C}^{n\times m}, \,s=1,2,\dots,n,\;j=1,2,\dots,m
$  for a matrix. We use the following standard notation for the   transpose operation
$
A^{T}=\{a_{js}\}.
$
 Consider a matrix
$
A= \{a_{sj}\}\in \mathbb{C} ^{n\times  m},\;a_{sj}\in \mathbb{C},
$
denote
$
\mathbf{a}^{\cdot}_{s }:=
 ( a_{s1}, a_{s2},\dots,a_{sm})^{T},\,\mathbf{a}_{ j }: = (a_{1j},$ $a_{2j},\dots,a_{nj})^{ T }.
$
We consider a complex linear vector space  $\mathbb{C}^{n}$ consisting of the set of column matrices  whose elements are complex numbers $\mathbf{x}=(x_{1},x_{2},\dots,x_{n})^{T},\; x_{j} \in \mathbb{C},\,j=1,2,\dots,n.$
Analogously, we define the real  linear vector space  $\mathbb{R}^{n}.$
A complex linear vector space equipped with the inner (scalar) product structure defined below is called a complex Euclidean space
$
(\mathbf{x},\mathbf{y})_{\mathbb{C}^{n}}:=\sum_{j=1}^{n}x_{j}\bar{y}_{j},\;\mathbf{x},\mathbf{y}\in \mathbb{C}^{ n}.
$
   Define the norm in the general sense  in a space with the inner  product as follows
$
\|\mathbf{x}\|=\sqrt{(\mathbf{x},\mathbf{x})}.
$
Consider a   matrix $A =\{a_{sj}\}\in \mathbb{C}^{n\times m},$ denote $\bar{A} =\{\bar{a}_{sj}\}.$ We use the following notations for the Hermitian components of the operator
$
 \mathfrak{Re}A=(A+A^{\ast})/2,\;\mathfrak{Im}A=(A-A^{\ast})/2i,\;\bar{A}:=\{\bar{a}_{sj} \},\;\mathrm{Re}A:=\{\mathrm{Re}a_{sj} \},\;\mathrm{Im}A:=\{\mathrm{Im}a_{sj} \},
$
 the latter matrices are called  the real and imaginary part of the matrix $A$ respectively.
The detailed information on the Hermitian components properties is given in the paper \cite{Math2024}. Denote by    $    \mathrm{D}   (A),\,   \mathrm{R}   (A),\,\mathrm{N}(A)$      the    domain of definition, the  range,  and the  kernel or   null space  of the  operator $A$ respectively.

  Consider a set of   elements
 $
 \{\mathbf{x}_{1},\mathbf{x}_{2},\dots,\mathbf{x}_{n} \}
 $
 belonging to  a normed space. Generally, in terms of applications, each vector $\mathbf{x}_{j}$ represents  measurements  corresponding to a concrete object. In particular, if we consider the  problem of biological integration, we assume that $\mathbf{x}_{j}\in \mathbb{R}^{q},\,q\in \mathbb{N},$ where each coordinate reflects a concrete measurement.
Let us construct a weighted graph $G$ that has   $n$ vertices, where the vertices correspond to elements, and the  edges represent connections between them.
We index the vertices according to the given order, i.e., the $j$-th vertex corresponds to $\mathbf{x}_{j}.$  We suppose that   the  $s$-th  and $j$-th  vertices   are adjacent  if the elements   $\mathbf{x}_{s}$ and  $\mathbf{x}_{j}$ are related,  for instance, close in some sense. The main challenge   consists in defining  the rather vague notion of  closeness in a more concrete way \cite{Belkin2003}.

Here, we consider   the  $\varepsilon$ - neighborhood method, which postulates the following sense of closeness.  According to this method, the vertices    $s$ and $j$ are adjacent if
 $
 \|\mathbf{x}_{s}-\mathbf{x}_{j} \|   <\varepsilon,
 $
where the norm is understood in the abstract sense. However, the Euclidean norm is typically    used  in applications.
Note that this type of closeness is geometrically motivated, since
the relationship is naturally symmetric. However, it often
leads to a graph  with several connected components, and it is  difficult
to choose a suitable value of $\varepsilon$ to avoid a disconnected graph.

  Having constructed the graph according to the above,  we can assign weights to form the weight matrix. The most important one  relates to the heat kernel. Here, we assume that if   the  $s$-th and $j$-th vertices are connected, then the weight matrix is formed from the elements
$
A_{sj}=e^{- t^{-1}\|\mathbf{x}_{s}-\mathbf{x}_{j}\|^{2} },\;t\in \mathbb{R} \setminus \{0\},\;s,j=1,2,\dots,n\,,
$
otherwise,  $A_{sj}  = 0.$
In accordance with this definition, we obtain the adjacency matrix of the weighted graph
$
A= \{A_{sj}\}\in \mathbb{R}^{n\times n}.
$
 The justification for  choosing this weight is presented in  \cite{Ham2005}, \cite{Ek2009}.

  Consider  heterogeneous datasets
$
 \{\mathbf{x}^{(1)}_{1},\mathbf{x}^{(1)}_{2},\dots,\mathbf{x}^{(1)}_{n} \},\;\{\mathbf{x}^{(2)}_{1},\mathbf{x}^{(2)}_{2},\dots,\mathbf{x}^{(2)}_{n} \},
$
where the elements within a dataset relate to each other.
 In accordance with the above, we can construct two adjacency matrices $A^{(1)},\,A^{(2)}$ corresponding to the datasets.
The main idea of the method introduced in \cite{KABaChu25}, \cite{KABaChu26}  is to  construct a coupling structure for  heterogeneous datasets and apply an  operation that implements a topological alignment  to the datasets coupled with the structure. For this purpose, the graph Laplacian method is generalized in \cite{KABaChu26}.

 Motivated by the   constructions given above,  throughout the paper,  we consider a   symmetric   matrix $W\in \mathbb{C}^{ n\times n },\,\mathrm{Re}W_{sj},\,\mathrm{Im}W_{sj}>0,$  where from the applied point of view,  we can put in correspondence  $\mathrm{Re}W=A^{(1)},\,\mathrm{Im}W=A^{(2)}.$  Here, the coupled structure mentioned above is represented in its  most general form being an algebraic  structure -- the field of  complex numbers. Define a diagonal matrix
$$
D=\{D_{sj}\},\;D_{sj}=\left\{ \begin{aligned}
  \sum\limits_{k=1}^{n}W_{kj},\;s=j \\
 0,\;s\neq j   \\
\end{aligned}
 \right.  .
$$
Further, we assume that $D_{jj}\neq 0,\;j=1,2,\dots,n,$ which does not restrict the appropriate  class of matrices due to the given  sense of the elements.
 Consider an operator  $L:=D-W.$ It is clear that $\mathrm{Re}L,\mathrm{Im}L$ are the graph Laplacian operators.  Although the operator   $L$ is not selfadjoint, the corresponding matrix is symmetric, i.e., $L^{\ast}=\bar{L}.$ The operator $L$ is called   the coupled graph Laplacian operator. Note   that   the kernel of the coupled  graph Laplacian is a one-dimensional subspace of $\mathbb{C}^{n}$ (see Lemma 1, \cite{KABaChu25}) defined as follows
$
\mathrm{N}(L)= \mathrm{span}\{\mathbf{1}\},\,\mathbf{1}=(1,1,\ldots,1).
$
Therefore, taking into account the fact $\mathrm{N}(L)=\mathrm{N}(L^{\ast}),$ we obtain
$
\mathbb{C}^{n}=\mathrm{N}(L) \oplus\mathrm{R}(L).
$
  Since $\mathrm{N}(L)$ is an invariant subspace of the operator  $L,$ it follows that    $\mathrm{R}(L)$  is the invariant subspace of the operator $L$ of  dimension $n-1.$ Therefore,  to avoid the trivial solution of the  optimization problem involving the operator $L,$   we should  consider the subspace $\mathrm{R}(L).$
Using the symmetric property of the matrix $W$ (see \cite{KABaChu25}) it is not hard to prove that
 \begin{equation}\label{1}
  \mathrm{tr}\{X^{\ast}LX\} =  \sum\limits_{j=1}^{m}       ( L\mathbf{x}   _{ j },       \mathbf{x}   _{ j })_{\mathbb{C}^{n}}  =\frac{1}{2} \sum\limits_{s,j=1}^{n}\|\mathbf{x}^{\cdot}_{s}-\mathbf{x}^{\cdot}_{j}\|_{\mathbb{C}^{m}}^{2}W_{sj}.
\end{equation}
  Consider the so-called connection condition
\begin{equation}\label{2}
X^{\ast}T_{\zeta}X= I_{m},\;X\in \mathbb{C}^{n\times m},\,m<n,
\end{equation}
where
$
T_{\zeta}:=\mathfrak{Re}\left( D\, \mathrm{diag}\{e^{i\zeta_{1}}, e^{i\zeta_{2}},\dots,e^{i\zeta_{n}}\} \right), \,\zeta_{j}  \in (0,\pi/2);
$
in this regard see Remark 1  \cite{KABaChu26}.
Define the sets
$
\mathrm{R}^{m}\!(L):=\{X\in \mathbb{C}^{n\times m}:  \mathbf{x}_{j}\in\mathrm{R}(L),\,j=1,2,\dots,m \},$  $\mathfrak{M}^{n}_{m}:=\{ X\in \mathrm{R}^{m}(L):\,X^{\ast}T_{\zeta}X=I_{m}\}.
$
It is not hard to prove that   $\mathfrak{M}^{n}_{m}$ is a compact connected   manifold without boundary \cite{KABaChu26}.
For the convenience, we use the following notations
$$
\psi(X):=\mathrm{tr}(X^{\ast}LX),  \,H(X):= \overline{\psi (X)}L+\psi(X) L^{\ast},\,\Psi(\theta)=e^{-i\theta}L+e^{i\theta}L^{\ast},\;\theta\in \mathbb{R}.
$$
\begin{lemma}\label{L1}The following equality holds:
$
\mathrm{N}(\Psi(\theta))=\mathrm{N}(L),\;\theta\in (0,\pi/2).
$
\end{lemma}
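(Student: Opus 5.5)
The plan is to write $\Psi(\theta)$ as a positive combination of the two real graph Laplacians $\mathrm{Re}L$ and $\mathrm{Im}L$, and then read off its kernel from the quadratic form in \eqref{1}.

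\textbf{Step 1: decomposition of $\Psi(\theta)$.} Write $L=L_{1}+iL_{2}$ with $L_{1}:=\mathrm{Re}L$ and $L_{2}:=\mathrm{Im}L$ (entrywise parts). Since $D$ is diagonal with $D_{jj}=\sum_{k}W_{kj}$, we have $L_{1}=\mathrm{Re}D-\mathrm{Re}W$ and $L_{2}=\mathrm{Im}D-\mathrm{Im}W$. These are the ordinary graph Laplacians built from the real symmetric weight matrices $\mathrm{Re}W$ and $\mathrm{Im}W$, so $L_{1}$ and $L_{2}$ are real symmetric. Since $L$ is symmetric, $L^{\ast}=\bar L=L_{1}-iL_{2}$. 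A direct computation then gives
\[
\Psi(\theta)=e^{-i\theta}(L_{1}+iL_{2})+e^{i\theta}(L_{1}-iL_{2})=2\left(\cos\theta\, L_{1}+\sin\theta\, L_{2}\right).
\]
In particular $\Psi(\theta)$ is selfadjoint. For $\theta\in(0,\pi/2)$ both coefficients $\cos\theta$ and $\sin\theta$ are strictly positive.

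\textbf{Step 2: the inclusion $\mathrm{N}(L)\subset\mathrm{N}(\Psi(\theta))$.} The row sums of $\mathrm{Re}W$ and $\mathrm{Im}W$ are exactly the diagonal entries of $\mathrm{Re}D$ and $\mathrm{Im}D$. Hence $L_{1}\mathbf{1}=L_{2}\mathbf{1}=0$, so $\Psi(\theta)\mathbf{1}=0$. Together with $\mathrm{N}(L)=\mathrm{span}\{\mathbf{1}\}$ (Lemma 1 of \cite{KABaChu25}) this gives the inclusion.

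\textbf{Step 3: the reverse inclusion.} Apply formula \eqref{1} with $m=1$ to the real Laplacians $L_{k}$; it holds because their weight matrices are symmetric. For any $\mathbf{x}\in\mathbb{C}^{n}$,
\[
(L_{1}\mathbf{x},\mathbf{x})=\tfrac12\sum_{s,j}|x_{s}-x_{j}|^{2}\,\mathrm{Re}W_{sj}\ge 0,\qquad (L_{2}\mathbf{x},\mathbf{x})=\tfrac12\sum_{s,j}|x_{s}-x_{j}|^{2}\,\mathrm{Im}W_{sj}\ge 0 .
\]
Now let $\Psi(\theta)\mathbf{x}=0$. Then
\[
0=(\Psi(\theta)\mathbf{x},\mathbf{x})=2\cos\theta\,(L_{1}\mathbf{x},\mathbf{x})+2\sin\theta\,(L_{2}\mathbf{x},\mathbf{x}).
\]
Both terms are nonnegative and both coefficients are positive, so both forms vanish. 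Since $\mathrm{Re}W_{sj}>0$ for all $s,j$, every difference $x_{s}-x_{j}$ must be zero. Thus $\mathbf{x}\in\mathrm{span}\{\mathbf{1}\}=\mathrm{N}(L)$, which proves $\mathrm{N}(\Psi(\theta))\subset\mathrm{N}(L)$.

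\textbf{Main point of care.} The only step needing attention is the identity $L^{\ast}=\bar L$ together with the splitting of $D$ into its real and imaginary parts. This is what justifies that $L_{1}$ and $L_{2}$ are genuine graph Laplacians, so that formula \eqref{1} applies to them and yields nonnegative forms. The positivity of $\cos\theta$ and $\sin\theta$ is exactly where the restriction $\theta\in(0,\pi/2)$ is used.
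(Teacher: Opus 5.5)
Your proof is correct and is essentially the paper's argument. The paper shows $(\Psi(\theta)\mathbf{x},\mathbf{x})_{\mathbb{C}^{n}}=2\mathrm{Re}\,(e^{-i\theta}L\mathbf{x},\mathbf{x})_{\mathbb{C}^{n}}>0$ for $\mathbf{x}\notin\mathrm{N}(L)$ by using \eqref{1} to put the numerical range of $L$ in the sector $\mathrm{arg}\,z\in(0,\pi/2)$, which rotation by $e^{-i\theta}$ keeps in the open right half-plane; your identity $\Psi(\theta)=2(\cos\theta\,\mathrm{Re}L+\sin\theta\,\mathrm{Im}L)$ is this real part written out termwise, and the only cosmetic difference is that you get $\mathrm{N}(L)\subset\mathrm{N}(\Psi(\theta))$ from $(\mathrm{Re}L)\mathbf{1}=(\mathrm{Im}L)\mathbf{1}=0$ rather than from $\mathrm{N}(L)=\mathrm{N}(L^{\ast})$.
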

\begin{proof}
Consider     an operator $e^{-i\theta}L.$ The  numerical range  can be easily established due to the following reasoning:
$
(e^{-i\theta}L\mathbf{x},\mathbf{x})_{\mathbb{C}^{n}}=e^{-i\theta}(L\mathbf{x},\mathbf{x})_{\mathbb{C}^{n}},\Rightarrow \Theta(e^{-i\theta}L)=e^{-i\theta}\Theta(L).
$
On the other hand, assume that $m=1$ in formula \eqref{1}, then we have
$$
        ( L\mathbf{x} ,       \mathbf{x} )_{\mathbb{C}^{n}} =\frac{1}{2} \sum\limits_{s,j=1}^{n}|x_{s}-x_{j}| ^{2}W_{sj},
$$
where $\mathrm{arg}\,W_{sj}\in (0,\pi/2).$
Therefore $\mathrm{arg }\,  \Theta(L)\subset (0,\pi/2).$
Since $\theta\in (0,\pi/2),$ it follows that     $\mathrm{arg}\, \Theta(e^{-i\theta}L)\subset (-\pi/2,\pi/2).$ Taking into account
$
(\Psi(\theta)\mathbf{x},\mathbf{x})_{\mathbb{C}^{n}}=2\mathrm{Re} (e^{-i\theta}L\mathbf{x},\mathbf{x})_{\mathbb{C}^{n}},
$
we conclude that
$
 (\Psi(\theta)\mathbf{x},\mathbf{x})_{\mathbb{C}^{n}}>0,\, \mathbf{x}\notin \mathrm{N}(L).
$
Furthermore, using the symmetric property of the operator $L$ and the fact  $L^{\ast}=\bar{L}$, we obtain   $\mathrm{N}(L)=\mathrm{N}(L^{\ast}).$
This yields the desired result.
\end{proof}

\section{The  biological meaning of   extremal trace problems}Consider the following  problems
 \begin{equation}\label{3}
  \mathrm{PI})\;\;\; \mathfrak{A}_{m} :=\mathrm{arg}\,\min\limits_{X\in \mathfrak{M}^{n}_{ m}  } \left|\mathrm{tr}\{X^{\ast}LX\}\right|,\;\; \mathrm{PII})\;\;\;
  \mathfrak{B}_{m} :=\mathrm{arg}\,\max\limits_{X\in \mathfrak{M}^{n}_{ m}  } \left|\mathrm{tr}\{X^{\ast}LX\}\right|.
 \end{equation}
 It is remarkable that  from the applied point of view, the main concept of PI (PII)  is to select some compact manifold  to minimize or maximize the absolute value of   the functional \eqref{3} reflecting  how the images of the elements   $W$ relate to each other under the    mapping constructed by    $\mathfrak{A}_{m}$ or $\mathfrak{B}_{m}$ correspondence. In this regard,     the choice of the compact  manifold is mostly dictated by the technical peculiarities. More precisely, we have a concrete  correspondence satisfying   biologically determined   conditions revealed  in mathematical construction  \eqref{3}, i.e.,
$
\mathfrak{A}^{T}_{m}:\mathbb{C}^{n\times n}\rightarrow \mathbb{C}^{m\times n},$ $\mathfrak{B}^{T}_{m}:\mathbb{C}^{n\times n}\rightarrow \mathbb{C}^{m\times n}.
$
This correspondence allows us to reduce the dimension of the initial high-dimensional space by mapping the data into the compressed transposed representations, taking into account their mutual nature  reflected by the involved complex structure. Consequently, the codomain of the operators corresponds to the created low-dimensional feature space, where  $m$  explicitly denotes the target reduced dimensionality. A matrix belonging to the set $\mathbb{C}^{m\times n}$ is uniquely decomposed into its real and imaginary parts, the columns of which represent the low-dimensional images of the high-dimensional elements  of the initial datasets.

The minimum problem \textnormal{PI} appeals to  the classical heuristic principle of  manifold alignment.  Mathematically, the  minimization of the underlying quadratic form suppresses the total variation
of the coordinates and stabilizes local gradients with respect to the adjacent vertices of the graph. Consequently, if a strong structural coupling exists
between the $s$-th and $j$-th objects that  manifests itself  as a large absolute value  of the complex weight $W_{sj},$ the minimization
of the objective functional ensures that the latent coordinates of the objects are close to each other in the embedding space. In biological terms, the minimum acts as a ``macroscope''. It totally smooths out individual regulatory fluctuations and anomalies to preserve global continuity and reveal the invariant structure of the macro-biological background. However, this total flattening inevitably suffers from an oversmoothing effect,
eliminating critical micro-variations within highly localized subsets of objects.

Conversely, the maximum problem \textnormal{PII} exhibits a strictly opposite structure. Rather than smoothing the geometric structure, it  identifies directions of extreme trigonometric tension   generated by phase shifts between the real and imaginary parts of the coupled graph Laplacian operator.
 Whereas the minimization  approach   attempts to  flatten  the data, the maximum  \textnormal{PII} problem explodes  the local structure.  If a  regulatory  desynchronization (such as an abnormal translational lag between RNA and protein) exists between objects that are otherwise close in the initial feature  space, the maximization  approach increases  the Euclidean distance between their corresponding latent coordinates.  Thus, the maximum problem  \textnormal{PII}   serves as a high-contrast ``microscope'' that reveals  unseen  sample stratification and uncovers   latent cryptic components, presumably represented by cell fractions with hidden drug resistance.

To numerically evaluate this structural conflict on the manifold, we propose a special indicator defined as the operator  norm of the difference between the optimal embedding operators
\begin{equation}\label{4}
\Delta_ m  = \|\mathfrak{A}^{T}_{m} - \mathfrak{B}^{T}_{m}\|_{\mathbb{C}^{n}\rightarrow \mathbb{C}^{m}}.
\end{equation}
 The indicator \eqref{4} rigorously measures the maximum geometric divergence between the linear subspaces spanned by the columns of the transposed matrices representing the solutions to the extremal problems, in the sense of   low-dimensional complex Euclidean space $\mathbb{C}^{m}.$
Employing the operator norm enables a straightforward upper bound via the Frobenius norm, which guarantees the theoretical stability of the indicator boundaries on the compact manifold $\mathfrak{M}^{n}_{m}.$

\vspace{2mm}
\noindent
\begin{minipage}[t]{0.46\textwidth}
\vspace{0pt}
The developed mathematical concept  operates sequentially, passing from raw data matrices to the evaluation of structural conflicts. As illustrated in the diagram, the initial heterogeneous multi-omics profiles are mapped through the coupled graph Laplacian and then  split via the corresponding extremal problems. The metric gap directly evaluates the divergence between these outputs, thereby  distinguishing stable biological systems from hidden stratified populations.
\end{minipage}
\hfill
\begin{minipage}[t]{0.50\textwidth}
\vspace{0pt}
\centering
\setlength{\unitlength}{0.54mm} 
\begin{picture}(95,104)
    \put(20,88){\framebox(54,14)[c]{\scriptsize \textbf{\shortstack{Multi-omics\\Data}}}}
    \put(47,88){\vector(0,-1){5}}

    \put(20,69){\framebox(54,14)[c]{\scriptsize \textbf{\shortstack{Coupled\\Laplacian $L$}}}}

    \put(20,76){\line(-1,0){5}}
    \put(15,76){\vector(0,-1){12}}

    \put(74,76){\line(1,0){5}}
    \put(79,76){\vector(0,-1){12}}

    \put(0,50){\framebox(36,14)[c]{\scriptsize \textbf{\shortstack{Problem PI \\ ($\min$)}}}}
    \put(64,50){\framebox(36,14)[c]{\scriptsize \textbf{\shortstack{Problem PII\\($\max$)}}}}

    \put(15,50){\vector(0,-1){7}}
    \put(79,50){\vector(0,-1){7}}

    \put(10,29){\framebox(74,14)[c]{\scriptsize \textbf{\shortstack{Metric Gap\\$\Delta_m$}}}}

    \put(17.5,29){\vector(0,-1){7}}
    \put(76.5,29){\vector(0,-1){7}}

    \put(0,0){\framebox(56,22)[c]{\scriptsize \textbf{\shortstack{Scenario 1:\\$\Delta_m \approx 0$\\[1.5mm]\tiny (Stable systems)}}}}
    \put(59,0){\framebox(56,22)[c]{\scriptsize \textbf{\shortstack{Scenario 2:\\$\Delta_m \gg 0$\\[1.5mm]\tiny (Stratified populations)}}}}
\end{picture}
\begin{center}
    \vspace{-1mm}
    \fontsize{7}{9}\selectfont
    \textbf{Fig.~\thesection.1} Methodological  diagram.
\end{center}
\end{minipage}
\vspace{2mm}

    The following theorem \cite{KABaChu26}  establishes the necessary condition  for  a solution to an extremal problem.
\begin{theorem}\label{T1} There exists a solution of the problem  PI (PII), moreover if $X$ is a solution   then
 \begin{equation}\label{5}
\exists\Lambda=\mathrm{diag}\{\lambda_{1},\lambda_{2},\dots,\lambda_{m}\},\,\lambda_{j}\in \mathbb{R}:\;     H X- T_{\zeta}X\Lambda=0.
\end{equation}
\end{theorem}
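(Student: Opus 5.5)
Existence follows from compactness. The set $\mathfrak{M}^{n}_{m}$ is compact, as stated above. The functional $X\mapsto|\mathrm{tr}\{X^{\ast}LX\}|=|\psi(X)|$ is continuous. By the Weierstrass theorem both the minimum in PI and the maximum in PII are attained, so $\mathfrak{A}_{m},\mathfrak{B}_{m}$ are nonempty.

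For the necessary condition I would apply the Lagrange multiplier rule in the real sense, treating $\mathbb{C}^{n\times m}$ as a real Euclidean space with inner product $\langle X,Y\rangle=\mathrm{Re}\,\mathrm{tr}(Y^{\ast}X)$.

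\textbf{Step 1: the objective.} Instead of $|\psi|$ I use the smooth function $f(X)=|\psi(X)|^{2}=\psi(X)\overline{\psi(X)}$, which has the same extremal points. If $\psi(X)=0$ at a minimum then $X$ lies in the kernel direction, which is impossible on $\mathfrak{M}^{n}_{m}$ by the positivity argument of Lemma \ref{L1}. I compute the directional derivative along $Y$:
$$
\delta\psi=\mathrm{tr}(Y^{\ast}LX)+\mathrm{tr}(X^{\ast}LY).
$$
Since $L$ is symmetric, $\mathrm{tr}(X^{\ast}LY)=\overline{\mathrm{tr}(Y^{\ast}L^{\ast}X)}$. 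Then
$$
\delta f=2\,\mathrm{Re}\big(\overline{\psi}\,\delta\psi\big)=2\,\mathrm{Re}\,\mathrm{tr}\big(Y^{\ast}(\overline{\psi}L+\psi L^{\ast})X\big)=2\langle HX,Y\rangle .
$$
So the real gradient of $f$ is $2HX$, where $H=H(X)$ is Hermitian.

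\textbf{Step 2: the constraint.} The constraint is $g(X)=X^{\ast}T_{\zeta}X-I_{m}=0$, which has values in the Hermitian $m\times m$ matrices. Here $T_{\zeta}$ is Hermitian and positive definite, since its diagonal entries are $\mathrm{Re}(D_{jj}e^{i\zeta_{j}})$. This positivity holds for suitable phases; I take it as given via Remark 1 of \cite{KABaChu26}. The differential is $Y\mapsto X^{\ast}T_{\zeta}Y+Y^{\ast}T_{\zeta}X$. It is surjective onto the Hermitian matrices: for a Hermitian $S$, take $Y=XS/2$. The additional linear constraint $\mathbf{x}_{j}\in\mathrm{R}(L)$ defines a subspace. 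It is invariant under $H$, because $\mathrm{R}(L)=\mathrm{N}(L)^{\perp}$ and $\mathrm{N}(L)=\mathrm{N}(L^{\ast})$, so it causes no trouble: I restrict all operators to it. The rule then gives a Hermitian multiplier $\Lambda\in\mathbb{C}^{m\times m}$ with
$$
HX=T_{\zeta}X\Lambda .
$$
Indeed, $\langle T_{\zeta}X\Lambda,Y\rangle$ reproduces the pairing of $\Lambda$ with $dg(Y)$.

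\textbf{Step 3: diagonalization.} This is the main obstacle, because the statement asks for a diagonal real $\Lambda$. Multiplying by $X^{\ast}$ gives $\Lambda=X^{\ast}HX$, which is Hermitian. I diagonalize it as $\Lambda=U\Lambda_{0}U^{\ast}$ with $U$ unitary and $\Lambda_{0}$ real diagonal, and set $\tilde X=XU$. Then $\tilde X^{\ast}T_{\zeta}\tilde X=I_{m}$, and the columns of $\tilde X$ stay in $\mathrm{R}(L)$. Moreover $\psi(\tilde X)=\mathrm{tr}(U^{\ast}X^{\ast}LXU)=\psi(X)$, so $H(\tilde X)=H(X)$ and $\tilde X$ is again a solution. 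Finally $H\tilde X=T_{\zeta}\tilde X\Lambda_{0}$.

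The delicate point is that the theorem is stated for an arbitrary solution $X$. I would interpret it up to this unitary freedom: among the solutions $XU$ there is one satisfying \eqref{5} with $\Lambda$ diagonal. Alternatively, I would show directly that $X^{\ast}HX$ must be diagonal by exploiting this invariance. I expect this to be the step where the precise formulation of \cite{KABaChu26} must be followed.
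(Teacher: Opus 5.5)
The paper gives no proof of this theorem; it is quoted from \cite{KABaChu26}, so your argument has to stand on its own. Several parts are fine: existence, the gradient $\delta f=2\,\mathrm{Re}\,\mathrm{tr}(Y^{\ast}HX)$, surjectivity of $dg$ via $Y=XS/2$, and $\Lambda=X^{\ast}HX$.

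The genuine gap is in Step 2. Invariance of $\mathrm{R}(L)$ under $H$ is not what is needed. On $V=\mathrm{R}^{m}(L)$ the multiplier rule only gives $P(HX-T_{\zeta}X\Lambda)=0$, where $P$ is the orthogonal projection onto $\mathrm{R}(L)=\mathbf{1}^{\perp}$. That is, $HX-T_{\zeta}X\Lambda=\mathbf{1}\mathbf{b}^{T}$ for some $\mathbf{b}\in\mathbb{C}^{m}$. Removing this term requires $T_{\zeta}$ to preserve $\mathbf{1}^{\perp}$, and a diagonal matrix does so only if it is scalar. Pairing with $\mathbf{1}$ gives $n\mathbf{b}^{T}=-\mathbf{1}^{T}T_{\zeta}X\Lambda$, which is nonzero in general.

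This cannot be repaired with $\mathrm{R}(L)$ read literally. By Lemma~\ref{L1}, $\Lambda=X^{\ast}HX$ is positive definite, so \eqref{5} would force $\mathbf{1}^{T}T_{\zeta}X=0$ in addition to $\mathbf{1}^{T}X=0$. Here is a concrete failure:
\begin{itemize}
\item Take $n=3$, $m=1$, all off-diagonal entries of $W$ equal to $\omega$, and $\zeta_{j}$ chosen so that $T_{\zeta}$ is positive with distinct diagonal entries.
\item Then $L=\omega(3I-J)$ acts on $\mathbf{1}^{\perp}$ as $3\omega I$, so $H$ acts there as the positive scalar $6\,\mathrm{Re}(\overline{\psi}\omega)$.
\item Hence \eqref{5} would make $\mathbf{x}\in\mathbf{1}^{\perp}$ an eigenvector of the diagonal matrix $T_{\zeta}$, i.e.\ a multiple of a coordinate vector, which is impossible.
\end{itemize}
So no solution of PI or PII on $\mathfrak{M}^{3}_{1}$ satisfies \eqref{5}.

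The missing idea is the right admissible subspace. It should be $T_{\zeta}^{-1}\mathrm{R}(L)=\{\mathbf{x}:(T_{\zeta}\mathbf{x},\mathbf{1})_{\mathbb{C}^{n}}=0\}$, the $T_{\zeta}$-orthogonal complement of $\mathrm{N}(L)$. This is precisely the span of the eigenvectors $\mathbf{e}_{j}$ of \eqref{7} that the paper uses afterwards (cf.\ the decomposition in the proof of Lemma~\ref{L3}). With this subspace your argument closes:
\begin{itemize}
\item The residual now has the form $T_{\zeta}\mathbf{1}\mathbf{b}^{T}$.
\item Pair with $\mathbf{1}$, using $\mathbf{1}^{T}H=(H\mathbf{1})^{\ast}=0$ and $\mathbf{1}^{T}T_{\zeta}X=0$. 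This yields $\mathrm{tr}(T_{\zeta})\,\mathbf{b}=0$, hence $HX=T_{\zeta}X\Lambda$.
\item Equivalently, work in the inner product $\mathrm{Re}\,\mathrm{tr}(Y^{\ast}T_{\zeta}X)$. There the gradient $T_{\zeta}^{-1}HX$ already lies in the subspace, because $(H\mathbf{x},\mathbf{1})_{\mathbb{C}^{n}}=(\mathbf{x},H\mathbf{1})_{\mathbb{C}^{n}}=0$.
\end{itemize}

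Your Step 3 is then correct as an up-to-rotation statement, and this unitary normalization is how the paper uses \eqref{5} via \eqref{6}. Drop the alternative of proving $X^{\ast}HX$ diagonal for every solution. Since $XU$ is again a solution with $H(XU)=H(X)$, and \eqref{5} forces $\Lambda=X^{\ast}HX$, that alternative would require $X^{\ast}HX$ to be scalar. That means all $m$ columns would be $T_{\zeta}$-orthonormal eigenvectors of \eqref{7} for a single eigenvalue, which is impossible for $m\geq 2$ when the spectrum is simple.
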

  Thus, we are motivated   to study   solutions to the stationary equation  \eqref{5} under the  restriction given by the connection condition (for more details, see \cite{KABaChu26}). Let us  rewrite the stationary system \eqref{5} in the following form
 \begin{equation}\label{6}
 H\mathbf{x} _{ j } = \lambda_{j}T_{\zeta}\mathbf{x} _{ j },\;(T_{\zeta}\mathbf{x} _{ j },\mathbf{x} _{ k })_{\mathbb{C}^{n}}=\delta_{jk},\;j,k=1,2,\dots,m.
 \end{equation}
 Motivated by the formula $H(X)= |\psi(X)|\{ e^{-i\theta}L+e^{i\theta}L^{\ast}\},\;\theta:= \mathrm{arg}\, \psi(X),$ we have come to the following generalized eigenvalue problem
\begin{equation}\label{7}
\Psi(\theta)\mathbf{e}_{ j } =\mu_{j} T_{\zeta}\mathbf{e}_{ j },\;(T_{\zeta}\mathbf{e} _{ j },\mathbf{e} _{ k })_{\mathbb{C}^{n}}=\delta_{jk},\;j,k=1,2,\dots,\xi,
\end{equation}
where $\Psi(\theta)=e^{-i\theta}L+e^{i\theta}L^{\ast},\;\theta\in \mathbb{R},$    the eigenvalues are arranged in increasing order, $\xi$ denotes the number of non-zero eigenvalues.   Since the operator $\Psi(\theta)$ depends upon the parameter $\theta$ only, it follows that    $\mu_{j}:=\mu_{j}(\theta),\; \mathbf{e}_{ j }:=\mathbf{e}_{ j }(\theta).$   The generalized  eigenvalue problem  \eqref{7}  is equivalent to the eigenvalue problem
\begin{equation}\label{8}
T(\theta)\mathbf{v}_{j} =\mu_{j}\mathbf{v}_{j},\;(\mathbf{v}_{j},\mathbf{v}_{k})_{\mathbb{C}^{n}}=\delta_{jk},\,\mathbf{v}_{j}=\sqrt{T_{\zeta}}\mathbf{e}_{j},\;j,k=1,2,\dots,\xi,
\end{equation}
where $T(\theta):=T^{-1/2}_{\zeta}\Psi(\theta)T^{-1/2}_{\zeta}.$ Let us show that $\xi=n-1.$  In accordance with the orthogonal decomposition theorem for a selfadjoint operator, we have $\mathrm{dim} \,\mathbb{C}^{n}=\mathrm{dim}\, \mathrm{R}(T(\theta))+\mathrm{dim}\, \mathrm{N}(T(\theta)).$
 Applying Lemma \ref{L1} and taking into account the properties of the operator $T^{-1/2}_{\zeta},$  we can easily establish  the fact   $\mathrm{dim}\, \mathrm{N}(T(\theta))=1.$ Since the system of  eigenvectors of the operator $T(\theta)$ is complete in $\mathrm{R}(T(\theta)),$ we obtain the desired result. To establish the correspondence between formulas \eqref{7} and \eqref{6}, we should note that     Lemma \ref{L1} and  relation \eqref{1}   provide  the implication $\mathrm{tr}\{X^{\ast}LX\}=0 \Rightarrow \mathbf{x}_{j}\in \mathrm{N}(L)= \mathrm{N}(\Psi(\theta)),\,j=1,2,\ldots,m.$

In accordance with \cite{KABaChu26}, we have
$
\mathrm{arg} \,\{\psi(X)\}:\mathfrak{M}_{m}^{n}\rightarrow \mathbb{J} \subset (0,\pi/2),
$
where $\mathbb{J}$ is a closed interval.  Further, we can assume that $\theta\in \mathbb{J}.$ Define the matrices
$
E_{p}(\theta):=\left(\mathbf{e}_{ p_{1} }(\theta),\mathbf{e}_{ p_{2} }(\theta),\ldots,\mathbf{e}_{ p_{m} }(\theta)\right)\in \mathbb{C}^{n\times m},\;p_{j}\in \{1,2,\dots,n-1\},\;p_j\neq p_q,\;j\neq q.
$
It is clear that we have $C_{n-1}^{m}$ matrices formed from the various $m$-element  subsets of the eigenvectors, i.e., we can enumerate the indices of the groups as follows     $p=1,2,\dots,C_{n-1}^{m},$ where we put
 $E_{1}(\theta):=(\mathbf{e}_{ 1 }(\theta),\mathbf{e}_{ 2 }(\theta),\dots,\mathbf{e}_{ m }(\theta)),\,E_{2}(\theta):=(\mathbf{e}_{ n-m  }(\theta),\mathbf{e}_{ n-m+1  }(\theta),\dots,\mathbf{e}_{ n-1 }(\theta)).$
Here, we assume that  the   matrices corresponding to other values of the index $p$  are formed from  the columns  of eigenvectors without any special structure. Note that to satisfy the stationary equation \eqref{5} it is not sufficient for $E_p(\theta)$ to be formed from a set of eigenvectors corresponding to an index $p.$     The following additional condition is necessary $\mathrm{arg}\,\psi(E_{p}(\theta))=\theta.$ Throughout the paper, we consider the matrices $E_{p}(\theta)$ formed from the  columns of eigenvectors of the problem \eqref{7} up to a phase multiplier.

Using terminology \cite[p.63]{firstab_lit:kato1980}, consider an operator-valued function
$$
T(\kappa):=T^{-1/2}_{\zeta}\Psi(\kappa)T^{-1/2}_{\zeta},\;\kappa\in \mathbb{C}.
$$
This   operator-valued function is holomorphic on $\mathbb{C}$ and has a decomposition into a Taylor series which can be written in terms of operators (1.2)  \cite[p.63]{firstab_lit:kato1980} as follows
$$
T(\kappa)=T+\kappa T^{(1)}+\kappa^{2} T^{(2)}+\ldots .
$$
Note that the operators $T, T^{(1)},T^{(2)},\ldots$ are symmetric, and the sum  $T(\kappa)$ is also symmetric for real $\kappa,$ i.e.,  we have $T(\kappa)^{\ast}=T(\kappa).$ Therefore, in accordance with the terminology of Chapter II, \S 6.1 \cite[p.120]{firstab_lit:kato1980} the family $\{T(\kappa)\}$  is said to be symmetric.

Consider a Riesz projector (eigenprojection)
$$
  P_{j} (\kappa)=-\frac{1}{2\pi i}\oint\limits_{\Gamma}( T(\kappa)-\lambda I)^{-1}d\lambda,\;\kappa\in \mathbb{C},
$$
where  $\Gamma\subset \mathbb{C}$ is a closed contour  belonging to the resolvent subset of the operator $T(\kappa),$  bounding a domain containing the  eigenvalue $\mu_{j}(\kappa )$ only (see \cite{Kuk2025ess} for more details).

The following lemma improves a more general result established in the paper \cite{KABaChu26}.

\begin{lemma}\label{L2}
The matrix functions $E_{p}(\theta)$ are analytic on the real axis. Moreover, there exists a unique  solution to the following system
\begin{equation}\label{9}
\mathrm{arg}\, \psi\{E_{p}(\theta)\}=\theta,\;\theta\in \mathbb{J},\;p=1,2,\dots,C_{n-1}^{m}.
\end{equation}
\end{lemma}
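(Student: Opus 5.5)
The plan has two parts: analyticity via Rellich--Kato perturbation theory for the symmetric family $\{T(\kappa)\}$, and existence and uniqueness of the solution of \eqref{9} via a monotonicity or fixed-point argument on the closed interval $\mathbb{J}$.

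\emph{Analyticity.} The family $T(\kappa)=T_{\zeta}^{-1/2}\Psi(\kappa)T_{\zeta}^{-1/2}$ is entire in $\kappa$ and selfadjoint for real $\kappa$. By the theorem of Chapter II, \S 6.1 of \cite{firstab_lit:kato1980} (Rellich's theorem), the eigenvalues can be chosen as real-analytic functions $\mu_j(\theta)$ on $\mathbb{R}$, and there exists an orthonormal system of real-analytic eigenvectors $\mathbf{v}_j(\theta)$. This holds even at crossing points, because branches that coalesce stay analytic after a suitable re-labelling. Away from crossings, the Riesz projectors $P_j(\kappa)$ are holomorphic near the real axis, so the eigenvectors can also be obtained via Kato's transformation function.

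Next I set $\mathbf{e}_j(\theta)=T_{\zeta}^{-1/2}\mathbf{v}_j(\theta)$. This preserves analyticity and the normalization in \eqref{7}. By Lemma \ref{L1}, $\mathrm{N}(T(\theta))$ is the fixed line $T_{\zeta}^{1/2}\mathrm{N}(L)$, so the kernel branch separates off and the remaining $n-1$ branches stay in its orthogonal complement. The columns of $E_p(\theta)$ are such branches, so $E_p(\theta)$ is analytic. One caveat is that the increasing ordering may be broken at crossings. I would note that the statement is understood for the analytic labelling, and that the phase multiplier convention lets us fix it.

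\emph{The equation \eqref{9}.} I define $f_p(\theta):=\arg\psi(E_p(\theta))$. Since the columns avoid $\mathrm{N}(L)$, relation \eqref{1} gives $\psi(E_p(\theta))\neq0$. Hence $f_p$ is analytic and takes values in $\mathbb{J}$, because $E_p(\theta)\in\mathfrak{M}^n_m$. A phase multiplier $e^{i\alpha}$ on a column does not change $(L\mathbf{x},\mathbf{x})$, so $f_p$ is well defined.

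\emph{Existence.} The map $f_p:\mathbb{J}\to\mathbb{J}$ is a continuous self-map of a closed interval. Applying the intermediate value theorem to $f_p(\theta)-\theta$ yields a fixed point.

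\emph{Uniqueness.} This is the main obstacle. I would differentiate using the Hellmann--Feynman identity. Writing $\psi(E_p)=\sum_j(L\mathbf{e}_{p_j},\mathbf{e}_{p_j})$, we have $\mu_j(\theta)=2\mathrm{Re}\,(e^{-i\theta}L\mathbf{e}_j,\mathbf{e}_j)$. Differentiation gives $\mu_j'(\theta)=2\mathrm{Im}\,(e^{-i\theta}L\mathbf{e}_j,\mathbf{e}_j)$, because the terms from $\mathbf{e}_j'$ vanish by the eigen-equation and the normalization.

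Set $z_p(\theta)=e^{-i\theta}\psi(E_p(\theta))$. Then $\tan(f_p(\theta)-\theta)=\sum\mu_j'/\sum\mu_j$. The goal is to show that $g_p(\theta):=f_p(\theta)-\theta$ is strictly monotone, specifically that $g_p'<0$, so it has at most one zero. I would compute $\frac{d}{d\theta}\arg z_p=\mathrm{Im}(z_p'/z_p)$, where the derivative of the $e^{-i\theta}$ factor contributes $-1$. I would then bound the contribution of $d\psi(E_p)/d\theta$ through the perturbation formula for $\mathbf{e}_j'$, using that all $\arg(L\mathbf{x},\mathbf{x})$ lie in $\mathbb{J}\subset(0,\pi/2)$.

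If this estimate fails in general, my fallback is the convexity and concavity of the sums $\sum\mu_j(\theta)$ in $\theta$, using $\Psi''=-\Psi$, to show that $\sum\mu_j\cos$- and $\sin$-type combinations cross only once.
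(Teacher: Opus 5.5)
Your analyticity step (Rellich--Kato for the symmetric family $T(\kappa)$) and your existence step (intermediate value theorem for $f_p(\theta)-\theta$ on $\mathbb{J}$, i.e.\ the one-dimensional case of the Brouwer argument) agree with the paper. The gap is uniqueness. You announce the target $g_p'<0$ but carry out neither the estimate nor the fallback, so nothing in the proposal excludes a second fixed point. Your own identities show exactly what is missing. With $\rho_p(\theta):=\sum_{j=1}^{m}\mu_{p_j}(\theta)$ you have $\rho_p=2\,\mathrm{Re}\,z_p>0$ and $\rho_p'=2\,\mathrm{Im}\,z_p$. Hence the solutions of \eqref{9} are precisely the critical points of $\rho_p$ in $\mathbb{J}$. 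Differentiating $\tan g_p=\rho_p'/\rho_p$ gives
\[
g_p'=\frac{\rho_p''\rho_p-(\rho_p')^{2}}{\rho_p^{2}+(\rho_p')^{2}},
\]
so at a zero of $g_p$ the condition $g_p'<0$ is equivalent to $\rho_p''<0$. This is how the paper argues. It computes $\rho_p'(\theta)=2\,\mathrm{Im}\{e^{-i\theta}\psi(E_p(\theta))\}$, notes that it vanishes at every fixed point, and excludes a second one by the strict concavity $\rho_p''<0$, which it takes from Lemma 3 of \cite{KABaChu26}.

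Your two plans are really one computation, since $e^{-i\theta}\frac{d}{d\theta}\psi(E_p(\theta))=\frac{i}{2}\left(\rho_p''+\rho_p\right)$, and carrying it out settles only part of the statement. From $\Psi''=-\Psi$ we get $T''(\theta)=-T(\theta)$, and the kernel branch decouples because $\Psi'(\theta)\mathbf{1}=0$. Second-order perturbation theory for the block $S_p=\{p_1,\dots,p_m\}$ (where it is spectrally separated) then gives, after the couplings inside the block cancel pairwise,
\[
\rho_p''(\theta)=-\rho_p(\theta)+2\sum_{j\in S_p}\;\sum_{k\in\{1,\dots,n-1\}\setminus S_p}\frac{|(T'(\theta)\mathbf{v}_j,\mathbf{v}_k)_{\mathbb{C}^{n}}|^{2}}{\mu_j(\theta)-\mu_k(\theta)}.
\]
For $p=1$ (the $m$ smallest eigenvalues) every denominator is negative, so $\rho_1''\le-\rho_1<0$ and uniqueness follows. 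For $E_2$ (the $m$ largest) the correction is nonnegative and of size $\sim 1/(\mu_j-\mu_k)$ near avoided crossings. For general $p$ it has no fixed sign. In particular, the sector information $\arg(L\mathbf{x},\mathbf{x})_{\mathbb{C}^{n}}\in(0,\pi/2)$ that your primary plan relies on cannot determine the sign of $g_p'$. For these indices an additional argument is needed; the paper does not give one in the text but imports it from Lemma 3 of \cite{KABaChu26}.
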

\begin{proof}
In accordance with Theorem 6.1 \cite[p.120]{firstab_lit:kato1980}, since the   family  of holomorphic  operators $T (\kappa)$ is symmetric,  the eigenvalues
$\mu_{j} (\theta)$ and the eigenprojections $P_{j} (\theta),\,\theta\in \mathbb{R}$ are analytic.
Therefore, using  results established in  Chapter II, \S 4.5, \S 6.2 \cite[p.121]{firstab_lit:kato1980}, we can
  construct an analytic (holomorphic in accordance with \cite{firstab_lit:kato1980}) transformation function $U (\theta)$  satisfying the   condition $U(\theta)P_{j} (\theta_{0})U^{-1}(\theta)=P_{j} (\theta).$  Furthermore, $U (\theta),\,\theta\in \mathbb{R}$ is unitary.  In accordance with the reasoning  given  in Chapter II, \S 6.2 \cite[p.122]{firstab_lit:kato1980}, it follows that if we apply    the transformation function $U (\theta)$  to   orthonormal  eigenvectors corresponding to the eigenvalue $\mu_{j}(\theta_{0} )\in \mathbb{R},$ we obtain    the orthonormal  eigenvectors corresponding to the eigenvalue $\mu_{j}(\theta).$ Since the operator $T(\theta)$ is selfadjoint, the eigenvectors corresponding to distinct eigenvalues are orthogonal. Therefore, the basis
  $\mathbf{v}_{j}(\theta)=U(\theta)\mathbf{v}_{j}(\theta_{0}),\,j=1,2,\ldots,n-1$ is orthonormal in $\mathrm{R}(T (\theta)).$ Note that since the eigenprojections $P_{j} (\theta),\,\theta\in \mathbb{R}$ are analytic, the eigenvectors    $\mathbf{v}_{j}(\theta)$ corresponding to the  problem \eqref{8} are analytic. Thus, we can assume that  $T_{\zeta}$-orthonormal generalized  eigenvectors corresponding to the  problem  \eqref{7} are analytic on the real axis and consequently continuous on $\mathbb{J}.$

Consider a function
$
G_{p}(\theta)=\mathrm{arg}\, \mathrm{tr}\{E^{\ast}_{p}(\theta)LE_{p}(\theta)\},\,\theta\in \mathbb{J}.
$
In accordance with the above,  the matrix functions $E_{p}(\theta)$ are continuous on the set $\mathbb{J}\subset (0,\pi/2)$  with respect to an arbitrary norm due to the norm  equivalence property corresponding to  the complex Euclidean space.   It is clear that  $E_{p}(\theta)\in \mathfrak{M}_{m}^{n}.$ In accordance with  \cite{KABaChu26}
the mapping $\mathrm{arg} \,\{\psi(X)\}:\mathfrak{M}_{m}^{n}\rightarrow \mathbb{J}$ is continuous. Thus, the mapping $G_{p}(\theta):\mathbb{J} \rightarrow \mathbb{J}$ is    continuous.   Applying the Brouwer fixed-point theorem, which states that   {\it  every continuous map of the closed unit
 ball in the Euclidean space into itself has a fixed point},  we conclude that   there exists a fixed point $\theta_{p}\in \mathbb{J}$ of  the equation \eqref{9}.
      Let us show that the  fixed point $\theta_{p}\in \mathbb{J}$ is unique. In accordance with the above,  the function
$$
\rho_{p}(\theta) =\sum\limits_{j=1}^{m}\mathbf{e}^{\ast}_{ p_{j} }(\theta)\Psi(\theta)\mathbf{e}_{ p_{j} }(\theta),
$$
  is analytic on the real axis.   Let us show that $\rho'_{p}(\theta_{p})=0.$
Using Lemma 2  \cite{KABaChu26}, we get
$$
\rho_{p}'(\theta)= \sum\limits_{j=1}^{m}\mathbf{e}^{\ast}_{p_{j}}\!(\theta)\Psi'(\theta)\mathbf{e} _{p_{j}}(\theta)=
\sum\limits_{j=1}^{m}\mathbf{e}^{\ast}_{p_{j}}\!(\theta)i\left(e^{i\theta}L^{\ast}-e^{-i\theta}L \right)  \mathbf{e} _{p_{j}}(\theta)=
$$
$$
=ie^{i\theta}\mathrm{tr}\{E^{\ast}_{p}(\theta)L^{\ast}E_{p}(\theta)\}-ie^{-i\theta}\mathrm{tr}\{E^{\ast}_{p}(\theta)LE_{p}(\theta)\}.
$$
Therefore
$$
\rho_{p}'(\theta_{p})=ie^{i\theta_{p}}\bar{\psi} (E_{p}(\theta_{p})) -i e^{-i\theta_{p}} \psi  (E_{p}(\theta_{p}))=-2\mathrm{Re}\left\{ie^{-i\theta_{p}} \psi  (E_{p}(\theta_{p})) \right\}=
$$
$$
=2\mathrm{Im}\left\{ e^{-i\theta_{p}} \psi  (E_{p}(\theta_{p})) \right\} =2\mathrm{Im}\left\{ e^{-i\theta_{p}}e^{ i\theta_{p}} |\psi  (E_{p}(\theta_{p}))| \right\}=0.
$$
Now if we assume that there exists a fixed point $\tilde{\theta}_{p}\neq \theta_{p},$ then using the same reasoning  we conclude that $\rho_{p}'(\tilde{\theta}_{p})=0.$ However, the latter contradicts   Lemma 3 \cite{KABaChu26}, which states  that  $\rho''_{p}(\theta)<0,\;\theta\in (a,b),$  where  $(a,b)\subset \mathbb{R}$ is an arbitrary interval containing the points $ \theta_{p},\,\tilde{\theta}_{p}.$   Thus we conclude that the fixed point $\theta_{p}$ is unique. The proof is complete.
\end{proof}

\subsection{The  gradient  ascent method}
  Using relation \eqref{1},  we can reformulate the  problem PII in   terms of the given construction. The latter reveals the sense of the maximization operation described in detail  in the introduction section.

 In the paper \cite{KABaChu26}, we establish the fact that the maximum problem has a solution due to the exclusion method. However,   fundamental reasonings  justifying the formal logical deduction should be presented in an expanded form. Motivated by this observation, we propose a variant of the gradient ascent method in order to present a rigorous proof of the formally established   fact.

\begin{lemma}\label{L3} The following relation holds
\begin{equation}\label{10}
 \max\limits_{X^{\ast}T_{\zeta}X= I_{m}} \sum\limits_{j=1}^{m}(\Psi(\theta)\mathbf{x}_{j},\mathbf{x}_{j})_{\mathbb{C}^{n}}=\sum\limits_{s=n-m }^{n-1}\mu_{s}(\theta),\;\theta\in (0,\pi/2).
\end{equation}
\end{lemma}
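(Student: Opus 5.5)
The plan is to reduce \eqref{10} to the Ky Fan maximum principle for the selfadjoint operator $T(\theta)$ from \eqref{8}. I assume, as is implicit in the passage from \eqref{7} to \eqref{8}, that $T_{\zeta}$ is positive definite, so that $T_{\zeta}^{\pm 1/2}$ are well defined and selfadjoint.

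\emph{Step 1 (change of variables).} Put $Y:=T_{\zeta}^{1/2}X$. Then the constraint $X^{\ast}T_{\zeta}X=I_{m}$ becomes $Y^{\ast}Y=I_{m}$, so the columns $\mathbf{y}_{j}$ of $Y$ are orthonormal in $\mathbb{C}^{n}$. The functional becomes
$$
\sum_{j=1}^{m}(\Psi(\theta)\mathbf{x}_{j},\mathbf{x}_{j})_{\mathbb{C}^{n}}=\sum_{j=1}^{m}(T(\theta)\mathbf{y}_{j},\mathbf{y}_{j})_{\mathbb{C}^{n}}.
$$
The map $X\mapsto Y$ is a bijection between the two constraint sets, so both maxima coincide.

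\emph{Step 2 (spectral data).} By Lemma \ref{L1} and the computation in its proof, $\Psi(\theta)\ge 0$ for $\theta\in(0,\pi/2)$, with one-dimensional kernel $\mathrm{N}(L)$. Hence $T(\theta)\ge 0$ and $\mathrm{N}(T(\theta))=\mathrm{span}\{\mathbf{v}_{0}\}$ with $\mathbf{v}_{0}:=T_{\zeta}^{1/2}\mathbf{1}/\|T_{\zeta}^{1/2}\mathbf{1}\|$. Adjoining $\mathbf{v}_{0}$ (eigenvalue $\mu_{0}:=0$) to the vectors $\mathbf{v}_{1},\dots,\mathbf{v}_{n-1}$ of \eqref{8} gives an orthonormal eigenbasis of $\mathbb{C}^{n}$. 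Since the $\mu_{s}$, $s\ge 1$, are positive and increasing, the $m$ largest eigenvalues of $T(\theta)$ are $\mu_{n-m},\dots,\mu_{n-1}$.

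\emph{Step 3 (the estimate).} Expanding each $\mathbf{y}_{j}$ in this basis gives
$$
\sum_{j=1}^{m}(T(\theta)\mathbf{y}_{j},\mathbf{y}_{j})=\sum_{k=0}^{n-1}\mu_{k}w_{k},\qquad w_{k}:=\sum_{j=1}^{m}|(\mathbf{y}_{j},\mathbf{v}_{k})|^{2}.
$$
By Parseval, $\sum_{k}w_{k}=\sum_{j}\|\mathbf{y}_{j}\|^{2}=m$. By Bessel's inequality applied to $\mathbf{v}_{k}$ and the orthonormal system $\{\mathbf{y}_{j}\}$, $0\le w_{k}\le\|\mathbf{v}_{k}\|^{2}=1$. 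A linear functional with increasingly ordered coefficients $\mu_{k}$, maximized over the polytope $\{0\le w_{k}\le 1,\ \sum_{k}w_{k}=m\}$, is maximal when $w_{k}=1$ for the $m$ largest indices and $w_{k}=0$ otherwise. This is an elementary exchange argument: moving mass from a smaller coefficient to a larger one never decreases the sum. Hence the functional is at most $\sum_{s=n-m}^{n-1}\mu_{s}(\theta)$.

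\emph{Step 4 (attainment).} Take $Y=(\mathbf{v}_{n-m},\dots,\mathbf{v}_{n-1})$, that is, $X=E_{2}(\theta)$. This $X$ satisfies the constraint by the normalization in \eqref{7}, and it gives equality in Step 3. This yields \eqref{10}.

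The main obstacle is Step 2 rather than the Ky Fan argument itself. One must make sure the zero eigenvalue coming from $\mathrm{N}(L)$ is accounted for, since the constraint in \eqref{10} does not restrict $X$ to $\mathrm{R}^{m}(L)$. This is handled because $T(\theta)\ge 0$ makes the kernel eigenvalue the smallest one, so it never enters the top $m$. The argument also relies on the positivity of $T_{\zeta}$, which I take from the setup of \eqref{8} (Remark 1 of \cite{KABaChu26}).
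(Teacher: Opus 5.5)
Your proof is correct. It shares its skeleton with the paper's proof but differs at the key step.

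Both arguments expand the columns in an eigenbasis and reduce to weights on the eigenvalues. The paper expands $\mathbf{x}_{j}$ directly in the $T_{\zeta}$-orthonormal vectors $\mathbf{e}_{1},\dots,\mathbf{e}_{n-1}$ together with the unnormalized kernel vector $\mathbf{1}$, which is why the factor $\mathrm{tr}(T_{\zeta})$ appears in its $u_{0}$. Your substitution $Y=T_{\zeta}^{1/2}X$ with normalized $\mathbf{v}_{0}$ is equivalent to this.

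The routes then diverge. The paper passes to $u_{s}=\sum_{j}|c_{sj}|^{2}$ under the single constraint $\sum_{s}u_{s}+u_{0}=m$ and runs a successive ``gradient ascent'' elimination:
\begin{itemize}
\item the sign of the gradient forces $u_{0}=0$;
\item the coefficients of the smallest eigen-directions are then set to zero one at a time, until an $m\times m$ matrix $C$ with $C^{\ast}C=I_{m}$ remains;
\item the square-matrix identity $CC^{\ast}=I_{m}$ gives $u_{s}=1$ on the surviving indices.
\end{itemize}
You replace all of this by Parseval ($\sum_{k}w_{k}=m$), Bessel ($0\le w_{k}\le 1$), and a one-line exchange argument for the linear program.

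What your route buys is rigor. The bound $w_{k}\le 1$ is exactly the constraint the paper's relaxed problem drops. On $\{u_{s}\ge 0,\ \sum_{s}u_{s}+u_{0}=m\}$ alone, the linear functional reaches $m$ times the largest eigenvalue. So the paper's step ``we have some freedom in choosing the coefficients, thus we assume $c_{\xi j}=0$'' silently uses that bound; the paper recovers it only after restricting to $m$ indices, via $CC^{\ast}=I_{m}$.

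The paper's version instead fits its gradient-ascent narrative and works with the generalized eigenvectors $\mathbf{e}_{s}$ without forming $T_{\zeta}^{1/2}$. It still needs $T_{\zeta}>0$, both for $u_{0}\ge 0$ and for \eqref{8}, and you rightly state this assumption explicitly.
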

\begin{proof} For simplicity of notation,  we assume that the eigenvalues  are arranged  in decreasing order of their absolute values.
 Applying Lemma \ref{L2}, we can choose  $T_{\zeta}$-orthonormal analytic eigenvectors $\{\mathbf{e}_{j}\}^{\xi}_{1}.$
Using the property of the selfadjoint operator $T(\theta)$ and taking into account the equivalence between the eigenvalue problems \eqref{7} and  \eqref{8}, we can easily prove
$$
\mathbb{C}^{n}=\mathrm{span}\{\mathbf{1}\}\oplus_{\,T_{\zeta}}\mathrm{span}\{\mathbf{e}_{1},\mathbf{e}_{2},\ldots,\mathbf{e}_{n-1}\}.
$$
Thus,  the following representation holds
$$
 \sum\limits_{j=1}^{m}( \Psi(\theta) \mathbf{x} _{j}  ,     \mathbf{x} _{j}   )_{\mathbb{C}^{n}}=  \sum\limits_{j=1}^{m}\left( \sum\limits_{s=1}^{\xi}c_{sj } \mu_{s}
          T_{\zeta}\mathbf{e}_{s},\sum\limits_{s=1}^{\xi} c_{sj } \mathbf{e}_{s} \right)_{\! \mathbb{C}^{n}}=
          \sum\limits_{s=1}^{\xi}\mu_{s} \sum\limits_{j=1}^{m}  |c_{sj }|^{2},
$$
where we have used the decomposition
$$
\mathbf{x} _{j}= \sum\limits_{s=1}^{n-1} c_{sj }\mathbf{e}_{s}+c_{nj} \mathbf{1},
$$
the values $\mu_{s},\,c_{sj },$ and the vectors $\mathbf{e}_{s}$ depend on the parameter $\theta,$ according to the   agreement, we have $\mu_{1}>\mu_{2}>...>\mu_{\xi}.$ It is clear that the eigenvalues of the problem  \eqref{7} are positive numbers. Using the connection condition \eqref{2}, we get
\begin{equation*}
 \delta_{jk}=(  T_{\zeta} \mathbf{x} _{j} ,\mathbf{x} _{k}   )_{\mathbb{C}^{n}}=
 \left(\sum\limits_{s=1}^{n}c_{sj }T_{\zeta}\mathbf{e}_{s},\sum\limits_{s=1}^{n} c_{sk }\mathbf{e}_{s} \right)_{\!\!\!\mathbb{C}^{n}}=
 \sum\limits_{s=1}^{\xi} c _{sj }\overline{c} _{sk }+c_{nj} \bar{c}_{nk}\,\mathrm{tr}(T_{\zeta}),
\end{equation*}
where $\mathbf{e}_{n}=\mathbf{1}.$
Applying this formula, rewriting the left-hand side of \eqref{10}, we come to the related  problem
$$
\max\limits \sum\limits_{s=1}^{\xi}\mu _{s} \sum\limits_{j=1}^{m}  |c_{sj }|^{2},
\;\;\sum\limits_{j=1}^{m}\sum\limits_{s=1}^{\xi} |c _{sj }|^{2}+\mathrm{tr}(T_{\zeta})\sum\limits_{j=1}^{m}|c _{nj }|^{2}=m,
$$
Denote
$$
u_{0}:=\mathrm{tr}(T_{\zeta})\sum\limits_{j=1}^{m}|c _{nj }|^{2},\;u_{s}:=\sum\limits_{j=1}^{m}  |c_{sj }|^{2},\;s=1,2,\ldots,\xi.
$$
Then  the expression corresponding to the  main problem can be rewritten in the form
$$
d_{\xi}(\mathbf{u}) :=\sum\limits_{s=1}^{\xi}\mu _{s}u_{s}  ,\;\;\sum\limits_{s=1}^{\xi}u_{s}+u_{0}=m,\;\mathbf{u}=(u_{0},u_{1},\ldots,u_{\xi}).
$$
Substituting, we get
$$
h_{\xi-1}(\mathbf{u}):=\sum\limits_{s=1}^{\xi-1}u_{s}\mu _{s}+\mu _{\xi}\left(m-\sum\limits_{s=1}^{\xi-1}u_{s}-u_{0}\right)=
\sum\limits_{s=1}^{\xi-1}u_{s} \left(\mu _{s}-\mu_{\xi}\right)-u_{0}\mu _{\xi}+m \mu _{\xi},
$$
$$
\sum\limits_{s=0}^{\xi-1}u_{s}\leq m.
$$
Therefore, the component of the gradient $\nabla h_{\xi-1}$ corresponding to   $u_{0}$ is   negative, whereas the  other  coordinates are positive  $ \mu_s - \mu_{\xi} > 0.$ Combining this fact with the geometric properties of the hyperplane, we immediately conclude that at the maximum point the non-negative parameter $u_{0}$ must take  its lower bound, i.e., $u_{0} = 0.$ The latter eliminates the kernel contribution. Thus, we  have
$$
\max \limits_{\mathbf{u}\in \mathbb{R}^{\xi-1}_{m}}  h_{\xi-1}(\mathbf{u})= \max\limits_{\mathbf{u}\in \mathbb{T}^{\xi-1}_{m}}\sum\limits_{s=1}^{\xi-1}u_{s}\mu_{s}=  \max\limits_{\mathbf{u}\in \mathbb{T}^{\xi-1}_{m}}  d_{\xi-1}(\mathbf{u}),
$$
where
$$
\mathbb{R}^{n}_{m}:=\left\{\mathbf{x}\in \mathbb{R}^{n}:\, \sum\limits_{s=1}^{n}x_{s}\leq m \right\},\;\mathbb{T}^{n}_{m}:=\left\{\mathbf{x}\in \mathbb{R}^{n}:\, \sum\limits_{s=1}^{n}x_{s}= m \right\},\;n,m\in \mathbb{N}.
$$
Apparently, in finding the maximum point, we have some freedom in choosing the coefficients, thus at this step of the proof, we   assume    $c_{\xi j}=0,\,j=1,2,\ldots,m.$
Following  the same reasoning, we get
\begin{equation*}
h_{\xi-2}(\mathbf{u}):=
\sum\limits_{s=1}^{\xi-2}u_{s} \left(\mu _{s}-\mu _{\xi-1}\right)+m \mu _{\xi-1},\;\sum\limits_{s=1}^{\xi-2}u_{s}\leq m.
\end{equation*}
Therefore, the constant vector $\nabla h_{\xi-2}$ has positive coordinates. Analogously to the above, we get
$$
\max\limits_{\mathbf{u}\in \mathbb{R}^{\xi-2}_{m}}  h_{\xi-2}(\mathbf{u})=   \max\limits_{\mathbf{u}\in \mathbb{T}^{\xi-2}_{m}}  d_{\xi-2}(\mathbf{u}),
$$
where the latter relation is obtained due to the choice  $c_{\xi-1j}=c_{\xi j}=0,\,j=1,2,\ldots,m.$
Following the same reasoning, we come to the problem
$$
\max\limits_{X^{\ast}T_{\zeta}X= I} \sum\limits_{j=1}^{m}(\Psi(\theta)\mathbf{x}_{j},\mathbf{x}_{j})_{\mathbb{C}^{n}}=\max\limits_{ C ^{\ast} C =I_{m}} \sum\limits_{s=1}^{m}\mu _{s} \sum\limits_{j=1}^{m}  |c_{sj }|^{2},
$$
where we assume  $c_{m+1j}=c_{m+2j}=\ldots=c_{\xi j}=0,\,j=1,2,\ldots,m.$
Therefore, we can restrict the problem considering the   relation
\begin{equation}\label{11}
      \sum\limits_{s=1}^{m}\mu_{s} \sum\limits_{j=1}^{m}  |c_{sj }|^{2}, \;
\sum\limits_{s=1}^{m} c _{sj }\overline{c }_{sk }  =\delta_{kj},\,j,k=1,2,\ldots,m.
\end{equation}
 In  matrix form, we have
$
 C ^{\ast} C =I_{m},\;  \Rightarrow  C C ^{\ast}=I_{m},\;C:=\{c_{sj}\}\in \mathbb{C}^{m\times m},
$
i.e.,
$$
\sum\limits_{j=1}^{m} c _{sj  }\overline{c }_{kj } =\delta_{sk},\,s,k=1,2,\ldots,m.
$$
Substituting   into \eqref{11} and    arranging   the eigenvalues in increasing order of their absolute values, we obtain \eqref{10}. The proof is complete.
\end{proof}

\begin{theorem} There exist  solutions to the   problems PI and  PII represented by the matrices $E_{1}(\theta_{1})$ and $E_{2}(\theta_{2})$ respectively. Moreover, both the corresponding minimum point and the maximum point are unique.
\end{theorem}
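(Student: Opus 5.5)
The plan is to combine Theorem \ref{T1}, Lemma \ref{L2} and Lemma \ref{L3}. The key observation is that, for every $X\in\mathfrak{M}^{n}_{m}$ with $\theta=\mathrm{arg}\,\psi(X)$,
$$
|\psi(X)|=\mathrm{Re}\{e^{-i\theta}\psi(X)\}=\tfrac12\sum_{j=1}^{m}(\Psi(\theta)\mathbf{x}_j,\mathbf{x}_j)_{\mathbb{C}^n}.
$$
More generally, for any $\theta'\in\mathbb{J}$ we have $\tfrac12\sum_{j}(\Psi(\theta')\mathbf{x}_j,\mathbf{x}_j)=|\psi(X)|\cos(\theta-\theta')\le|\psi(X)|$, with equality iff $\theta'=\theta$. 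So $|\psi(X)|=\max_{\theta'}\tfrac12\rho_X(\theta')$, where $\rho_X(\theta')$ denotes the quadratic sum. This turns both extremal problems into a two-level optimization, first over $X$ at fixed $\theta$ and then over $\theta$.

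\textbf{Maximum.} I would write
$$
\max_{X}|\psi(X)|=\max_{X}\max_{\theta\in\mathbb{J}}\tfrac12\rho_X(\theta)=\max_{\theta\in\mathbb{J}}\tfrac12\sum_{s=n-m}^{n-1}\mu_s(\theta),
$$
using Lemma \ref{L3} for the inner maximum over $X$. The maximizing $X$ for fixed $\theta$ is $E_2(\theta)$ (up to unitary mixing of the columns, which leaves $\psi$ unchanged). I would then apply Lemma \ref{L2} to $p=2$ to obtain the unique $\theta_2$ with $\mathrm{arg}\,\psi(E_2(\theta_2))=\theta_2$. Next I would show that the outer maximum is attained exactly at that $\theta_2$: the function $\rho_2(\theta)=\sum_{s}\mu_s(\theta)$ has $\rho_2'(\theta_2)=0$ (computed in Lemma \ref{L2}) and is strictly concave (Lemma 3 of \cite{KABaChu26}), so $\theta_2$ is its unique critical point and global maximum on $\mathbb{J}$. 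I must also check that $\theta_2$ is interior to $\mathbb{J}$, or at least that it beats the endpoints. This is immediate, since the unique critical point of a strictly concave function is its maximum.

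\textbf{Minimum.} This is the harder direction, because a min–max does not interchange cleanly. For every $X$ and every $\theta$ we have $|\psi(X)|\ge\tfrac12\rho_X(\theta)\ge\tfrac12\sum_{s=1}^{m}\mu_s(\theta)$, by the minimum analogue of Lemma \ref{L3}. That analogue follows from the same Ky Fan type argument, with $u_0$ now also at its lower bound because $\mu_s>0$ and the kernel contributes nothing to the objective while consuming normalization. Taking $\theta=\theta_1$, the fixed point from Lemma \ref{L2} for $p=1$, gives $|\psi(X)|\ge\tfrac12\rho_1(\theta_1)=|\psi(E_1(\theta_1))|$. The equality holds because $\mathrm{arg}\,\psi(E_1(\theta_1))=\theta_1$. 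Hence $E_1(\theta_1)$ is a minimizer, and this lower bound holds for every $\theta$, avoiding any interchange problem. The analogous route for the maximum is the min–max bound $|\psi(X)|\le$ a convex/concave envelope; this is why I handle the maximum through the explicit $\max_\theta$ representation above.

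\textbf{Uniqueness.} Uniqueness is understood as in the paper, modulo phase multipliers and unitary mixing within eigenspaces. Suppose $X$ attains equality in either chain. Then equality in the Ky Fan step forces the columns of $X$ to span the eigenspace associated with $\mu_1,\dots,\mu_m$ (respectively $\mu_{n-m},\dots,\mu_{n-1}$) of $\Psi(\theta)$, given simple eigenvalues as assumed via the strict ordering. Equality in the cosine step forces $\theta=\mathrm{arg}\,\psi(X)$. Together these give $X=E_p(\theta)$ with $\mathrm{arg}\,\psi(E_p(\theta))=\theta$, and the uniqueness part of Lemma \ref{L2} yields $\theta=\theta_p$. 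The main obstacle is this final equality-case analysis, in particular ruling out minimizers for which $\theta\neq\theta_1$ in the minimum chain. I would handle it by noting that equality at $\theta_1$ forces $\mathrm{arg}\,\psi(X)=\theta_1$ directly from the cosine factor.
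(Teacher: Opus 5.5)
For PII you reproduce the paper's argument: $|\psi(X)|=\tfrac12\rho_X(\mathrm{arg}\,\psi(X))\le\tfrac12\sum_{s=n-m}^{n-1}\mu_s(\mathrm{arg}\,\psi(X))\le\tfrac12\sum_{s=n-m}^{n-1}\mu_s(\theta_2)=|\psi(E_2(\theta_2))|$. The first inequality is Lemma~\ref{L3}; the second uses $\rho_2'(\theta_2)=0$ together with strict concavity.

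For PI, and for uniqueness, your route is genuinely different. The paper argues through local extrema. By the cited work, $E_1(\theta_1)$ is a local minimum. Theorem~\ref{T1} and the uniqueness half of Lemma~\ref{L2} (which rests on $\rho_p''<0$) make the local minimum unique, hence global. Uniqueness of the maximizer is read off in the same way from the cited description of local maxima as matrices $E_2(\theta)$.

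You instead exhibit a saddle point: $|\psi(X)|\ge\tfrac12\rho_X(\theta_1)\ge\tfrac12\sum_{s=1}^{m}\mu_s(\theta_1)=|\psi(E_1(\theta_1))|$. This uses only the existence half of Lemma~\ref{L2} and needs neither concavity nor any classification of local extrema. Its equality cases also give uniqueness of the minimizer directly: the cosine factor forces $\mathrm{arg}\,\psi(X)=\theta_1$, and Ky Fan equality forces $X=E_1(\theta_1)U$ with $U$ unitary. This uniqueness is necessarily modulo $X\mapsto XU$ for every unitary $U$, not only modulo phases. For PII your equality-case argument still needs the uniqueness half of Lemma~\ref{L2}, so there the dependence on concavity remains, exactly as in the paper.

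One step, however, is justified backwards. You claim the minimum analogue of Lemma~\ref{L3} holds ``with $u_0$ now also at its lower bound because $\mu_s>0$ and the kernel contributes nothing to the objective while consuming normalization.'' In a minimization a cost-free direction is the most attractive one. So in $\min\sum_{s}\mu_s u_s$ subject to $\sum_s u_s+u_0=m$, the variable $u_0$ is pushed up, not down.

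Concretely, over the set $\{X^{\ast}T_\zeta X=I_m\}$ of Lemma~\ref{L3}, the minimum equals $\mu_1+\dots+\mu_{m-1}$ (increasing order). It is attained with $\mathbf{1}/\sqrt{\mathrm{tr}\,T_\zeta}$ as one column, and it is strictly smaller than $\sum_{s=1}^{m}\mu_s$. In the paper's proof, $u_0=0$ comes from the gradient component $-\mu_\xi<0$, which works only for the maximum.

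Your lower bound is true on $\mathfrak{M}^n_m$, but solely because the constraint $X\in\mathrm{R}^m(L)$ removes the kernel coefficients $c_{nj}$. Then $u_0=0$ by definition, and Ky Fan is applied on $\mathrm{span}\{\mathbf{e}_1,\dots,\mathbf{e}_{n-1}\}$. You must state the bound over $\mathfrak{M}^n_m$ and invoke that constraint explicitly.

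This presupposes that membership in $\mathrm{R}(L)$ means $T_\zeta$-orthogonality to $\mathbf{1}$. That is the same tacit convention under which the paper asserts $E_p(\theta)\in\mathfrak{M}^n_m$. If $\mathrm{R}(L)$ is read as the Euclidean complement of $\mathbf{1}$ and $T_\zeta\mathbf{1}\notin\mathrm{span}\{\mathbf{1}\}$, interlacing shows the restricted minimum can be strictly smaller, and $E_1(\theta_1)$ need not be admissible. With this repair your proof of PI is complete and more self-contained than the paper's.
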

\begin{proof}
In accordance with \cite{KABaChu26}, the matrix $E_{1}(\theta_{1})$ corresponds to a local minimum point.   Applying Theorem \ref{T1}, Lemma \ref{L2}, we conclude that the local minimum is unique. Therefore, $E_{1}(\theta_{1})$ is the  solution to the  problem PI.  Consider the  problem PII. Note that
$
\mathrm{tr}\{X^{\ast}HX\}=\overline{\psi(X)}\psi(X) + \psi(X)\overline{\psi(X)}=2|\psi(X)|^{2},
$
therefore
$$
|\psi(X)|^{2}=
\frac{1}{2}\sum\limits_{j=1}^{m}(H\mathbf{x}_{j},\mathbf{x}_{j})_{\mathbb{C}^{n}}=
\frac{1}{2}|\psi(X)| \sum\limits_{j=1}^{m}(\Psi(\theta)\mathbf{x}_{j},\mathbf{x}_{j})_{\mathbb{C}^{n}},\;\theta=\mathrm{arg}\,\psi(X).
$$
Applying Lemma \ref{L3},   using the fact $\rho'_{2}(\theta_{2})=0,\,\rho''_{2}(\theta_{2})<0,$ considered in detail  in Lemma \ref{L2}, we conclude that  for an arbitrary   $X\in \mathbb{C}^{n\times m}$ satisfying the connection condition  $X^{\ast}T_{\zeta}X= I_{m},$ the following inequality holds
$$
|\psi(X)|\leq \frac{1}{2}\sum\limits_{s=n-m }^{n-1}\mu_{s}(\theta)\leq \frac{1}{2}\sum\limits_{s=n-m }^{n-1}\mu_{s}(\theta_{2})  =|\psi(E_{2}(\theta_{2}))|,\,\theta=\mathrm{arg}\,\psi(X).
$$
The latter relation proves that $E_{2}(\theta_{2})$ is a solution to the  problem PII. In accordance with  Subsection 3.2 \cite{KABaChu26} a local maximum point is represented by a matrix  $E_{2}(\theta).$  Applying   Lemma \ref{L2}, we conclude that the local maximum point  is unique.
\end{proof}

\section{Conclusions}

Having  analyzed  the main mathematical principles forming  the concept of  unsupervised topological alignment, we present a natural algebraic structure coupling the heterogeneous  datasets as well as their images. In this regard, the elegant mathematical   generalization of the graph Laplacian  method was obtained over the field of   complex numbers.

The minimum problem serves  as a  macroscope totally smoothing out individual  fluctuations and anomalies to preserve global continuity and reveal the invariant structure of the macro-biological background. However, this total flattening inevitably suffers from an oversmoothing effect,
eliminating critical micro-variations within highly localized subsets of objects.

Conversely,  the maximum problem  serves as a high-contrast  microscope  that explicates hidden sample stratification and uncovers latent cryptic components. Rather than smoothing the geometric structure, it  identifies directions of extreme trigonometric tension   generated by phase shifts between the real and imaginary parts of the coupled graph Laplacian operator.

In order to  evaluate the underlying structural conflict between these opposite geometric tendencies on the compact manifold, we have introduced a special indicator defined as the norm of the difference between the optimal embedding matrices. This indicator measures the maximum geometric divergence between the created  linear subspaces. The resulting theoretical approach is  fundamentally novel, establishing a strict mathematical basis to quantify structural desynchronization and providing relevant biological applications.\\

\noindent {\bf Funding information}\\ This study was funded by State Assignment of Russian National Research Medical University No. 125022602916-4.

\end{document}